\newcolumntype{C}{>{\centering\arraybackslash}X} 
\begin{document}
\fancyhead{}
\title{Say No to the Discrimination: Learning Fair Graph Neural Networks with Limited Sensitive Attribute Information}
\author{Enyan Dai, Suhang Wang}
\affiliation{The Pennsylvania State University}
\email{{emd5759, szw494}@psu.edu}

\begin{abstract}
Graph neural networks (GNNs) have shown great power in modeling graph structured data. However, similar to other machine learning models, GNNs may make predictions biased on protected sensitive attributes, e.g., skin color and gender. 
Because machine learning algorithms including GNNs are trained to reflect the distribution of the training data which often contains historical bias towards sensitive attributes. In addition, the discrimination in GNNs can be magnified by graph structures and the message-passing mechanism. As a result, the applications of GNNs in sensitive domains such as crime rate prediction would be largely limited. Though extensive studies of fair classification have been conducted on i.i.d data, methods to address the problem of discrimination on non-i.i.d data are rather limited. Furthermore, 
the practical scenario of sparse annotations in sensitive attributes is rarely considered in existing works. Therefore, we study the novel and important problem of learning fair GNNs with limited sensitive attribute information. FairGNN is proposed to eliminate the bias of GNNs whilst maintaining high node classification accuracy by leveraging graph structures and limited sensitive information. Our theoretical analysis shows that FairGNN can ensure the fairness of GNNs under mild conditions given limited nodes with known sensitive attributes. Extensive experiments on real-world datasets also demonstrate the effectiveness of FairGNN in debiasing and keeping high accuracy.

\end{abstract}

\keywords{Fairness; Graph Neural Networks; Node Classification}

\maketitle

\section{introduction}

Graph neural networks (GNNs)~\cite{bruna2013spectral,kipf2016semi,velivckovic2017graph,hamilton2017inductive} have achieved remarkable performance on various domains such as knowledge graph~\cite{hamaguchi2017knowledge,wang2018cross}, social media mining~\cite{hamilton2017inductive}, nature language processing~\cite{kipf2016semi,yao2019graph}, and recommendation system~\cite{ying2018graph,berg2017graph}. Generally, message-passing process 
is adopted in GNNs~\cite{kipf2016semi,hamilton2017inductive}, where information from neighbors is aggregated for every node in each layer. This process enriches node representations, and preserves both node feature characteristics and topological structures. 

Despite the success in modeling graph data, GNNs trained on graphs may inherit the societal bias in data, which limits the adoption of GNNs in many real-world applications. \textit{First}, extensive studies~\cite{dwork2012fairness,beutel2017data,creager2019flexibly} have revealed that historical data may include patterns of previous discrimination and societal bias. Machine learning models trained on such data can inherit the bias on sensitive attributes such as ages, genders, skin color, and regions~\cite{dwork2012fairness,beutel2017data}, which implies that GNNs could also exhibit the bias. \textit{Second}, the topology of graphs and the message-passing of GNNs could magnify the bias. Generally, in graphs such as social networks, nodes of similar sensitive attributes are more likely to connect to each other than nodes of different sensitive attributes~\cite{dong2016young,rahman2019fairwalk}. For example, young people tend to build friendship with people of similar age on the social network \cite{dong2016young}. 
This makes the aggregation of neighbors' features in GNN have similar representations for nodes of similar sensitive information while different representations for nodes of different sensitive features, leading to severe bias in decision making, i.e., the predictions are highly correlated with the sensitive attributes of the nodes. Our preliminary experiments in Sec.~\ref{sec:dis} indicate that GNNs have a larger bias due to the adoption of graph structure than models which only use node attributes, which verifies our intuition. The bias would largely limit the wide adoption of GNNs in domains such as ranking 
of job applicants~\cite{mehrabi2019survey} and crime rate prediction~\cite{suresh2019framework}. Thus, it is important to investigate fair GNNs.

However, developing fair GNNs is a non-trivial task. \textit{First}, to achieve fairness, we need to obtain abundant nodes with known sensitive attributes so that we can either revise the data or regularize the model; whereas people are unwilling to share their sensitive information in the real-world, and resulting in inadequate nodes with sensitive attributes known for fair model learning.
For example, only 14\% teen users public their complete profiles on Facebook~\cite{madden2013teens}. 
The lacking of sensitive information challenges many existing work on fair models~\cite{beutel2017data,locatello2019fairness,louizos2015variational,creager2019flexibly}. \textit{Second}, though extensive efforts have been made to establish fair models by revising features \cite{zhang2017achieving,kamiran2009classifying,kamiran2012data}, disentanglement \cite{louizos2015variational,creager2019flexibly}, adversarial debiasing \cite{edwards2015censoring,beutel2017data} and fairness constraints \cite{zafar2015fairness,zafar2017fairness},
they are overwhelmingly dedicated to independently and identically distributed (i.i.d) data, which cannot be directly applied on graph data for the absence of simultaneous consideration of the bias from node attributes and graph structures.
Recently, \cite{rahman2019fairwalk,bose2019compositional} aim to learn fair node representations from  graphs. These methods merely deal with plain graphs without any node attributes, and focus on fair node representations instead of fair node classifications.

Therefore, in this paper, we study a novel problem of learning fair graph neural networks with limited sensitive information. In essence, we need to solve two challenges: (i) how to overcome the shortage of sensitive attributes for eliminating discrimination; and (ii) how to ensure the fairness of the GNN classifier. In an attempt to address these challenges, we propose a novel framework named as \textbf{FairGNN} for fair node classification. 
A GNN sensitive attribute estimator is adopted in FairGNN to predict plenty of sensitive attributes with noise for fair classification. Inspired by existing works of fair classification on i.i.d data with adversarial learning \cite{edwards2015censoring,beutel2017data,zhang2018mitigating,madras2018learning}, we deploy an adversary to ensure the GNN classifier make predictions independent with the estimated sensitive attributes.
To further stabilize the training process and performance in fairness, we introduce a fairness constraint to make the predictions invariant with the estimated sensitive attributes.
Our main contributions are: 
\begin{itemize}
    \item We study a novel problem of fair graph neutral networks learning with limited sensitive information;
    \item A new framework, FairGNN, is proposed to settle the shortage of sensitive attributes for adversarial debiasing and fairness constraint by estimating users' sensitive attributes;
    \item We conduct theoretical analysis showing fairness achieves at the global minimum even with estimated sensitive attributes;
    \item Extensive experiments on different datasets demonstrate the effectiveness of our methods in eliminating discrimination while keeping high accuracy of GNNs.
\end{itemize}

The rest of the paper is organized as follows. In Sec.~\ref{sec:related_work}, we review related work. In Sec.~\ref{sec:preliminary_analysis}, we  conduct preliminary analysis to understand the bias issue of GNNs. In Sec.~\ref{sec:methdology}, we give the details of FairGNN. In Sec.~\ref{sec:experiments}, we conduct experiments to show the effectiveness of FairGNN. In Sec.~\ref{sec:conclusion}, we conclude with future work.

\section{related work} \label{sec:related_work}
In this section, we will review related work including graph neural networks and fairness in machine learning.

\subsection{Graph Neural Networks}
Graph neural networks (GNNs), which generalize neural networks for graph structured data, have shown great success for various applications~\cite{hamaguchi2017knowledge,yao2019graph,ying2018graph,tang2020investigating,zhao2020semi,sun2019node,tang2020transferring}. 
Generally, GNNs can be categorized into two categories, i.e., spectral-based~\cite{bruna2013spectral,henaff2015deep,defferrard2016convolutional,kipf2016semi,levie2018cayleynets} and spatial-based~\cite{velivckovic2017graph,hamilton2017inductive,chiang2019cluster,ying2018graph}.
Spectral-based GNNs define graph convolution based on spectral graph theory, which is first explored by \citeauthor{bruna2013spectral} \cite{bruna2013spectral}. Since then, more spectral-based methods are developed for further improvements and extensions \cite{henaff2015deep,defferrard2016convolutional,kipf2016semi,levie2018cayleynets}. Graph Convolutional Network (GCN)~\cite{kipf2016semi} is a particularly popular method which simplifies the convolutional operation on the graph. Spatial-based graph convolution directly updates the node representation by aggregating its neighborhoods' representations \cite{niepert2016learning,gilmer2017neural,hamilton2017inductive,ying2018graph}. 
\citeauthor{velivckovic2017graph} \cite{velivckovic2017graph} introduce the self-attention into the aggregation of spatial graph convolution by assigning higher weights to the more important nodes in graph attention network (GAT). Various spatial methods are proposed to solve the scalability issue of GCN \cite{hamilton2017inductive,chiang2019cluster}.  For example, a neighbor sampling method to train GNN with nodes in mini-batch instead of the whole graph is developed in GraphSAGE~\cite{hamilton2017inductive}.
Moreover, spatial-based methods have already been successfully deployed to deal with extremely large industrial datasets \cite{ying2018graph}.

The essential idea of GNNs is to propagate the information of nodes through the graph to get better representations. However, people tend to build relationships with those sharing the same sensitive attributes. Then, representations in GNNs are nearly propagated within the subgroup, which highly increases the risk of discrimination towards sensitive attributes. Despite the risk of discrimination in GNNs, there is no existing work to address this problem. Thus, we study the novel problem of learning fair GNNs to eliminate the potential discrimination.

\subsection{Fairness in Machine Learning}
Many works have been conducted to deal with the bias in the training data to achieve fairness in machine learning~\cite{zhang2017achieving,kamiran2009classifying,kamiran2012data,beutel2017data,locatello2019fairness,dwork2012fairness,hardt2016equality}. Based on which stage of the machine learning training process is revised, algorithms could be split into three categories: the pre-processing approaches, the in-processing approaches, and the post-processing approaches. 
The pre-processing approaches are applied before training machine learning models. They could reduce the bias by modifying the training data through correcting labels \cite{zhang2017achieving,kamiran2009classifying}, revising attributes of data \cite{kamiran2012data,feldman2015certifying}, generating non-discriminatory labeled data \cite{xu2018fairgan,xu2019fairgan+,sattigeri2019fairness}, and obtaining fair data representations \cite{beutel2017data,locatello2019fairness,edwards2015censoring,zemel2013learning,louizos2015variational,creager2019flexibly}. The in-processing approaches are designed to revise the training of the state-of-the-art models. 
Typically the machine learning models are trained with
additional regularization terms or a new objective function. \cite{dwork2012fairness,zafar2015fairness,kamishima2011fairness,zhang2018mitigating}. Finally, the post-processing approaches directly change the predictive labels to ensure fairness \cite{hardt2016equality,pleiss2017fairness}. 
Recently, several works explore the learning of fair graph embeddings for recommendation \cite{rahman2019fairwalk,bose2019compositional}. Fairwalk \cite{rahman2019fairwalk} modifies the random walk procedure of node2vec \cite{grover2016node2vec} to obtain a more diverse network neighborhood representations. The sensitive attributes of all the nodes are required in the sampling procedure of FairWalk. \citeauthor{bose2019compositional} \cite{bose2019compositional} propose to add discriminators 
to eliminate the sensitive information in the graph embeddings. Similar to Fairwalk, the training process of the discriminators is in need of the sensitive attributes of all the nodes.

Our work is inherently different from existing works: (i) we focus on learning fair GNNs for node classification instead of fair graph embeddings; (ii)  we address the problem that only a limited number of nodes are provided with sensitive attributes in practice.

\section{Preliminaries Analysis} \label{sec:preliminary_analysis}
In this section, we first conduct preliminary analysis on real-world datasets to show that GNNs could exhibit more serve bias due to the graph structure and the message-passing. Sequentially, We formally give the problem definition of fair node classification.  

\subsection{Notations} \label{sec:notations}
We use $\mathcal{G}=(\mathcal{V},\mathcal{E}, \mathbf{X})$ to denote an attributed graph, where $\mathcal{V}=\{v_1,...,v_N\}$ is the set of $N$ nodes, $\mathcal{E} \subseteq \mathcal{V} \times \mathcal{V}$ is the set of edges, and $\mathbf{X}=\{\mathbf{x}_1,...,\mathbf{x}_N\}$ is the set of node features. $\mathbf{A} \in \mathbb{R}^{N \times N}$ is the adjacency matrix of the graph $\mathcal{G}$, where $\mathbf{A}_{ij}=1$ if nodes ${v}_i$ and ${v}_j$ are connected; otherwise, $\mathbf{A}_{ij}=0$. In the semi-supervised setting, part of nodes $v \in \mathcal{V}_L$ are provided with labels $y_v \in \mathcal{Y}$, 
where $\mathcal{V}_L \subseteq \mathcal{V}$ denotes nodes with labels, and $\mathcal{Y}$ is the set of labels. Sensitive attributes of training nodes are required to achieve fairness of machine learning algorithms. In our setting, only a small set of nodes $\mathcal{V}_S \subset \mathcal{V}$ are provided with the sensitive attribute $s \in \{0,1\} $. The set of provided sensitive attributes is denoted by $\mathcal{S}$.

\subsection{Datasets} \label{sec:datasets}
\begin{table}[t]
    \small
    \centering
    \caption{The statistics of datasets.}
    \vskip -1.5em
    \begin{tabularx}{0.88\linewidth}{p{0.35\linewidth}XXX}
    \toprule
    Dataset   & Pokec-z & Pokec-n  &  NBA  \\
    \midrule
    \# of nodes  & 67,797 & 66,569 & 403\\
    \# of node attributes  & 59 & 59 & 39\\
    \# of edges  & 882,765 & 729,129 & 16,570 \\
    Size of $\mathcal{V}_L$  & 500 & 500 & 100\\
    Size of $\mathcal{V}_S$ & 200 & 200 & 50 \\
    Group ratio & 1.84 & 2.46  & 2.77\\
    \# of inter-group edges  & 39,804 & 31,515 & 4,401\\
    \# of intra-group edges  & 842,961 & 697,614 & 12,169\\
    \bottomrule
    \end{tabularx}
    \label{tab:stat}
    \vskip -1.5em
\end{table}
For the purpose of this study, we collect and sample datasets from Pokec and NBA. The details are described as below.

\textbf{Pokec \cite{takac2012data}}: It is the most popular social network in Slovakia, which is very similar to Facebook and Twitter. This dataset contains  anonymized data of the whole social network in 2012. User profiles of Pokec contain gender, age, hobbies, interest, education, working field and etc. The original Pokec dataset contains millions of users. Based on the provinces that users belong to, we sampled two datasets named as: \textbf{Pokec-z} and \textbf{Pokec-n}. Both Pokec-z and Pokec-n consist of users belonging to two major 
regions of the corresponding provinces. We treat the region as the sensitive attribute. The classification task is to predict the working field of the users. 

\textbf{NBA}: This is extended from a Kaggle dataset~\footnote{https://www.kaggle.com/noahgift/social-power-nba} containing around 400 NBA basketball players. 
The performance statistics of players in the 2016-2017 season and other various information e.g., nationality, age, and salary are provided. To obtain the graph that links the NBA players together, we collect the relationships of the NBA basketball players on Twitter with its official crawling API~\footnote{https://developer.twitter.com/en}. We binarize the nationality to 
two categories, i.e., U.S. players and oversea players, which is used as sensitive attribute. 
The classification task is to predict whether the salary of the player is over median.

For all the datasets, we eliminate nodes without any links with others. We randomly sample labels and sensitive attributes separately to get $\mathcal{V}_L$ and $\mathcal{V}_S$.
We randomly sample 25\% and 50\%  of nodes containing both sensitive attributes and labels in Pokec-z, Pokec-n and NBA as validation sets and test sets. Note that the validation sets and test sets have no overlap with $\mathcal{V}_L$ and $\mathcal{V}_S$. 
The key statistics of the datasets are given in Table \ref{tab:stat}. Apart from the basic statistics, we also report the ratio of the majority and minority group and the number of edges linking the same group and different groups. It is evident from the table that: (i) skew exists in sensitive attributes; (ii) most of relationships are between users who share the same sensitive attribute. 

\subsection{Preliminaries of Graph Neural Networks} 

Graph neural networks (GNNs) utilize the node attributes and edges to learn a representation $\mathbf{h}_v$ of the node $v \in \mathcal{V}$. The goal of learning representation in node classification is to predict the node $v$'s label as $y_v = f(\mathbf{h}_v)$ . Current GNNs are neighborhood aggregation approaches, which will update the representations of the nodes with the representations of the neighborhood nodes. The representations after $k$ layers' 
aggregation would capture the 
structural information of the $k$-hop network neighborhoods. The updating process of the $k$-th layer in GNN could be formulated as:
\begin{equation}
\begin{aligned}
    \mathbf{a}^{(k)}_v & = \text{AGGREGATE}^{(k-1)}(\{\mathbf{h}^{(k-1)}_u: u \in \mathcal{N}(v)\}),
    \label{eq:GNN_a} \\
    \mathbf{h}^{(k)}_{v} & =\text{COMBINE}^{(k)}(\mathbf{h}^{(k-1)}_v, \mathbf{a}^{(k)}),
\end{aligned}
\end{equation}
where $\mathbf{h}^{(k)}_v$ is the representation vector of the node $v \in \mathcal{V}$ at $k$-th layer and $\mathcal{N}(v)$ is  a set of neighborhoods of $v$.

\subsection{Fairness Evaluation Metrics}
In this subsection, we will present two definitions of fairness for the binary label $y \in \{0,1\}$ and the sensitive attribute $s \in \{0,1\}$. $\hat{y} \in  \{0, 1\}$ denotes the prediction of the classifier $\eta$: $\mathbf{x} \rightarrow y$.
\begin{definition}
(Statistical Parity \cite{dwork2012fairness}). Statistical parity requires the predictions to be independent with the sensitive attribute $s$, i.e., $\hat{y} \bot s$. It could be formally written as:
\begin{equation}
    P(\hat{y}|s=0)=P(\hat{y}|s=1). 
\end{equation}
\end{definition}
\begin{definition} (Equal Opportunity \cite{hardt2016equality}).
Equal opportunity requires the probability of an instance in a positive class being assigned to a positive outcome should be equal for both subgroup members. The property of equal opportunity is defined as:
 \begin{equation}
     P(\hat{y}=1|y=1,s=0) = P(\hat{y}=1|y=1,s=1). 
 \end{equation}
The equal opportunity expects the classifier to give equal true positive rates across the subgroups.
According to~\cite{louizos2015variational,beutel2017data}, we apply the following metrics to quantitatively evaluate statistical parity and equal opportunity:
 \begin{equation}
      \Delta_{SP}  = |P(\hat{y}=1|s=0)-P(\hat{y}=1|s=1)|,
 \end{equation}
 \begin{equation}
     \Delta_{EO} = |P(\hat{y}=1|y=1,s=0) - P(\hat{y}=1|y=1,s=1)|,
 \end{equation}
 where the probabilities are evaluated on the test set.
\end{definition}
\label{sec:fairness}
\subsection{Discrimination in Graph Neural Networks}

\begin{table}[t]
    \small
    \centering
    \caption{Results of models w/ and w/o utilizing graph.}
    \vskip -1.5em
    \begin{tabularx}{\linewidth}{|p{0.1\linewidth}|p{0.12\linewidth}|X|X|X|X|}
    \hline
    Dataset & Metrics & MLP & MLP-e & GCN & GAT \\
    \hline
    \hline
    \multirow{4}{*}{Pokec-z}
        & ACC (\%) & 65.3 $\pm 0.5$ & 68.6 $\pm 0.3$ & 70.2  $\pm 0.1$ & 70.4 $\pm 0.1$ \\
        & AUC (\%)& 71.3 $\pm 0.3$ & 74.8 $\pm 0.3$ & 77.2  $\pm 0.1$ & 76.7 $\pm 0.1$\\
        & $\Delta_{SP}$ (\%)& 3.8 $\pm 1.3$ & 6.9 $\pm 1.0$ & 9.9  $\pm 1.1$ & 9.1 $\pm 0.9$  \\
        & $\Delta_{EO}$ (\%)& 2.2 $\pm 0.7$ & 4.0 $\pm 1.5$ & 9.1  $\pm 0.6$ & 8.4  $\pm 0.6$ \\
        \hline
        \hline
    \multirow{4}{*}{Pokec-n}   
        & ACC (\%) & 63.1 $\pm 0.4$ & 66.3 $\pm 0.6$ & 70.5 $\pm 0.2$ & 70.3 $\pm 0.1$\\
        & AUC (\%)& 68.2 $\pm 0.3$ & 72.4 $\pm 0.6$  & 75.1 $\pm 0.2$ & 75.1 $\pm 0.2$\\
        & $\Delta_{SP}$ (\%)& 3.3 $\pm 0.6$ & 8.7 $\pm 1.0$ & 9.6 $\pm 0.9$ & 9.4 $\pm 0.7$ \\
        & $\Delta_{EO}$ (\%)& 7.1 $\pm 0.9$ & 9.9 $\pm 0.6$ & 12.8 $\pm 1.3$ & 12.0 $\pm 1.5$ \\
        \hline
        \hline
    \multirow{4}{*}{NBA}     & ACC (\%)& 63.6 $\pm 0.9$ & 66.1 $\pm 1.1$ & 71.2 $\pm 0.5$ & 71.9 $\pm 1.1$\\
        & AUC (\%)& 73.5 $\pm 0.3$ & 74.4 $\pm 1.2$ & 78.3 $\pm 0.3$ & 78.2 $\pm 0.6$\\
        & $\Delta_{SP}$ (\%)& 6.0$\pm 1.5$ & 10.9 $\pm 1.9$ & 7.9 $\pm 1.3$ & 10.2 $\pm 2.5$\\
        & $\Delta_{EO}$ (\%)& 6.1 $\pm 1.8$ & 8.8 $\pm 3.0 $ & 17.8 $\pm 2.6$ & 15.9 $\pm 4.0$ \\
    \hline
    \end{tabularx}
    
    \label{tab:pre}
    \vspace{-3mm}
\end{table}
Various machine learning algorithms such as logistic regression \cite{zafar2015fairness}, SVM \cite{zafar2015fairness}, and MLP \cite{edwards2015censoring} have been reported to have discrimination. The features of the instances may contain proxy variables of the sensitive attribute. It could result in biased predictions. For GNNs, edges in graph can bring linking bias, i.e., the misrepresentation due to the connections of users \cite{mehrabi2019survey}.
It has been proven that the embeddings of nodes within the connected component will be closer after one aggregation in GCN \cite{li2018deeper,wang2020unifying}. Since most of edges are intra-group as Table \ref{tab:stat} shows, embeddings of nodes sharing the same sensitive attribute will be closer after $k$-layer information aggregation. As a result, representations of the nodes may exhibit bias. 
Intuitively, similar discrimination also exists in other GNNs that aggregate information of neighborhoods.

To empirically demonstrate the existence of discrimination in GNNs, we make comparisons between the following models:
\begin{itemize}
    \item \textbf{MLP}: A multi-layer perception model trained on $\mathcal{V}_L$.
    \item \textbf{MLP-e}: A MLP model utilizes graph structure by adding embeddings learned by deepwalk to the features.
    \item \textbf{GCN} \cite{kipf2016semi}: A state-of-the-art spectral graph neural network.
    \item \textbf{GAT} \cite{velivckovic2017graph}: A spatial graph neural network which utilizes attention to assign higher weights to more important edges.
\end{itemize}
For each model, we run the experiment 5 times. The classification results and discrimination scores on the test set are reported in Table \ref{tab:pre}. 
From the table, we observe that (i) both performance of GCN and GAT are much better than MLP, which is as expected because GCN and GAT adopt both node attributes and the graph structure for classification; (ii) Compared with MLP, models utilizing graph structure, i.e., GCN and GAT, perform significantly worse in terms of fairness, which verifies that \textit{bias exists in GNNs and the graph structure could further aggravate the discrimination.}
\label{sec:dis}
\subsection{Problem definition}
Our preliminary analysis verifies that GNNs have severe bias issue. Thus, it is important to develop fair GNNs. Following existing work of fair models \cite{louizos2015variational,feldman2015certifying,beutel2017data,xu2018fairgan}, we focus on the binary class and binary sensitive attribute setting, i.e., both $y$ and $s$ can either be 0 or 1. We leave the extension to multi-class and multi-sensitive attribute setting as a future work.
With the notations given in Section \ref{sec:notations}, 
the fair GNN problem is formally defined as:

\vspace*{-0.3em}

\newtheorem{problem}{Problem}
\begin{problem}
Given a graph $\mathcal{G}=(\mathcal{V},\mathcal{E}, \mathbf{X})$, small labeled node set $\mathcal{V}_L \in \mathcal{V}$ with the corresponding labels in $\mathcal{Y}$, and a small set of nodes $\mathcal{V}_S \in \mathcal{V}$ with corresponding sensitive attributes in $\mathcal{S}$, learn a fair GNN for fair node classification, i.e.,
\begin{equation}
    f(\mathcal{G}, \mathcal{Y}, \mathcal{S}) \rightarrow \hat{\mathcal{Y}}
\end{equation}
where $f$ is the function we aim to learn and $\hat{\mathcal{Y}}$ is the set of predicted labels for unlabeled nodes.  $\hat{\mathcal{Y}}$ should maintain high accuracy whilst satisfying the 
fairness criteria such as statistical parity.
\end{problem}

\section{methodology} \label{sec:methdology}

In this section, we give the details of FairGNN. An illustration of the proposed framework is shown in Figure \ref{fig:framework}, which is composed of a GNN classifier $f_{\mathcal{G}}$, a GCN based sensitive attribute estimator $f_E$ and an adversary $f_A$. The classifier $f_{\mathcal{G}}$ takes $\mathcal{G}$ as input for node classification.  The sensitive attribute estimator $f_E$ is to predict the sensitive attributes for nodes whose sensitive attributes are unknown, which paves us a way to adopt adversarial learning to learn fair node representations and to regularize the predictions of $f_{\mathcal{G}}$. Specifically, the adversary $f_A$ aims to predict the known or estimated sensitive attributes by $f_E$ from the node representation learned by $f_\mathcal{G}$; while $f_\mathcal{G}$ aims to learn fair node representations that can fool the adversary $f_A$ to make wrong predictions. We theoretically prove that under mild conditions, such minmax game can guarantee that learned representations are fair. 
In addition to make the representations fair, we directly add a regularizer on the predictions of $f_{\mathcal{G}}$ to guarantee that $f_{\mathcal{G}}$ gives fair predictions. Next, we introduce each component in detail along with theoretical proof.

\begin{figure}
    \centering
    \includegraphics[width=0.90\linewidth]{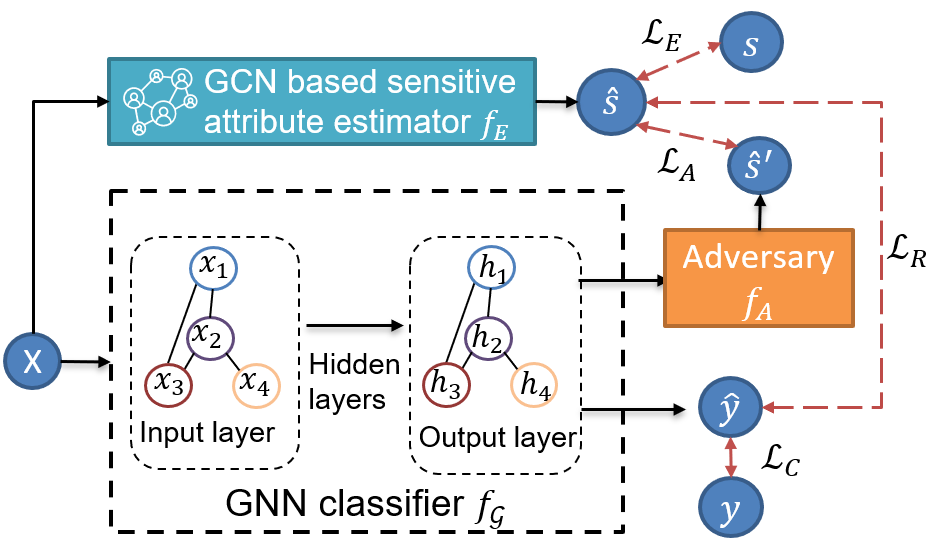}
    \vskip -1.5em
    \caption{The overall framework of FairGNN.}
    \vskip -1.5em
    \label{fig:framework}
\end{figure}


\subsection{The GNN Classifier $f_{\mathcal{G}}$}
The GNN classifier $f_{\mathcal{G}}$ takes $\mathcal{G}$ as input and predicts node labels. The proposed framework FairGNN is flexible. Any GNNs that follow the structure of Eq.(\ref{eq:GNN_a}) can be used such as GCN~\cite{kipf2016semi} and GAT~\cite{velivckovic2017graph}. Let $f_{\mathcal{G}}^{(k)}$ denote the operation of aggregating and combining the information of node $v$ and its $k$-hop neighborhoods through $k$ layers' iterations in GNN classifier $f_{\mathcal{G}}$. For a GNN with $K$ layers, the representation of node $v$ of the final layer could be written as:
\begin{equation}
    \mathbf{h}_v = f_{\mathcal{G}}^{(K)}(\mathbf{x}_v,\mathcal{N}^{(K)}_v), 
\end{equation}
where $\mathcal{N}^{(K)}_v$ represents the $K$-hop neighborhoods of $v$. To get the $\hat{y}_v$, i.e., the prediction of node $v$, a linear classification layer is applied to $\mathbf{h}_v$ as:
\begin{equation}
    \hat{y}_v = \sigma(\mathbf{h}_v \cdot \mathbf{w}),
    \label{eq:y_v}
\end{equation}
where $\mathbf{w} \in \mathbb{R}^{d}$ is the weights of the linear classification layer and $\sigma$ is the sigmoid function. The loss function for training $f_{\mathcal{G}}$ is
\begin{equation} \label{eq:obj_GNN_classifier}
    \min_{\theta_{\mathcal{G}}}\mathcal{L}_C = -\frac{1}{|\mathcal{V}_L|}\sum_{v\in \mathcal{V}_L} [y_v \log{\hat{y}_v} + (1-y_v) \log{(1 - \hat{y}_v})],
\end{equation}
where $|\mathcal{V}_L|$ denotes the size of $\mathcal{V}_L$, $\theta_{f_{\mathcal{G}}}$ represents the parameters of $f_{\mathcal{G}}$ and $y_v$ is the groundtruth label of node $v$.

\subsection{Adversarial Debiasing with Estimator $f_E$}
The GNN classifier $f_\mathcal{G}$ can make biased predictions because the learned representations of $f_\mathcal{G}$  exhibit bias due to the node features, graph structure and aggregation mechanism of GNN. One way to make $f_\mathcal{G}$ fair is to eliminate the bias of the final layer representations $\mathbf{h}_{v}$. Recently, adversarial debiasing has been proven to be effective in alleviating the bias of representations  \cite{beutel2017data,edwards2015censoring,liao2019learning,madras2018learning}.
In the general process of adversarial debiasing, an adversary is used to predict sensitive attributes from the representations of the classifier; while the classifier is trained to learn representations to make the adversary unable to predict the sensitive attributes while keep high accuracy in the classification task. Such process requires \textit{abundant data samples with known sensitive attributes} so that we can judge if the adversary can make accurate predictions or not.

However, in practice people are reluctant to share their sensitive attributes, which leads to small size $\mathcal{V}_S$. Lacking of data with labeled sensitive attributes would result in poor improvement in fairness even with adversarial debiasing. Though we have limited nodes with sensitive attributes, i.e., small $\mathcal{V}_S$, generally, nodes with similar sensitive attributes are more likely connected to each other, which makes it possible to accurately predict the sensitive attributes for nodes in $\mathcal{V} - \mathcal{V}_S$ using the graph $\mathcal{G}$ and $\mathcal{V}_S$. Thus, we deploy a graph convolutional network $f_E: \mathcal{G} \rightarrow \mathcal{S}$ to estimate the sensitive attribute of node whose sensitive attribute is unavailable. The large amount of estimated sensitive attributes would greatly 
benefit the adversarial debiasing. Note that it is important to use two separate GNNs for node label prediction and sensitive attribute prediction because we aim to learn fair representations $\mathbf{h}_v$ for $f_{\mathcal{G}}$, i.e., $\mathbf{h}_v$ does not contain the sensitive information. The objective function of training $f_E$ is
\begin{equation}
    \min_{\theta_E} \mathcal{L}_E = -\frac{1}{|\mathcal{V}_S|} \sum_{v\in \mathcal{V}_S} [s_v \log{\hat{s}_v} + (1-s_v) \log{(1 - \hat{s}_v})],
    \label{eq:L_E}
\end{equation}
where $\hat{s}_v$ is the predicted sensitive attribute of node $v \in \mathcal{V}_S$ by  $f_E$ and $\theta_E$ is the set of parameters of $f_E$. 

With $f_E$, we could get the estimation of the sensitive attributes $\hat{\mathcal{S}}_u$ of the  nodes $u \in (\mathcal{V} - \mathcal{V}_S)$. 
We use $\hat{\mathcal{S}}$ to denote the set of sensitive attributes by combining $\mathcal{S}$ and $\hat{\mathcal{S}}_u$, i.e., $\hat{\mathcal{S}} = \mathcal{S} \cup \hat{\mathcal{S}}_u$.
During the training process, for each node $v \in \mathcal{V}$, the adversary 
$f_A$ tries to predict $v$'s sensitive attribute $\hat{s}_v$ given the representation $\mathbf{h}_v$ as $f_A(\mathbf{h}_v)$; 
while  $f_{\mathcal{G}}$ aims to learn node representation $\mathbf{h}_v$ that makes the adversary $f_A$ unable to distinguish which sensitive group the node $v$ belong to. This min max game can be written as
\begin{equation}
\begin{aligned}
\min_{\theta_{\mathcal{G}}} \max_{\theta_A} \mathcal{L}_A & = \mathbb{E}_{\mathbf{h} \sim p(\mathbf{h}|\hat{s}=1)} [\log(f_A(\mathbf{h}))] \\
 & + \mathbb{E}_{\mathbf{h} \sim p(\mathbf{h}|\hat{s}=0)} [ \log(1-f_A(\mathbf{h}))],
\end{aligned}
\label{eq:A_S}
\end{equation}
where $\mathbf{h} \sim p(\mathbf{h}|\hat{s}=1)$ means sampling a node with sensitive attribute as 1 from $\mathcal{G}$. $\theta_A$ is the parameters of $f_A$.  

\textbf{\textbf{Theoretical Analysis.}}
Since the size of $\mathcal{V}_S$ is small, the estimation of sensitive attributes will introduce nonnegligible noise. The noise of the sensitive attributes may influence the adversarial debiasing. Thus, we conduct theoretical analysis to show that sensitive attributes containing noise could help to achieve statistical parity under mild conditions. Next, we give the details of the proof. 
\begin{proposition}
The global minimum of Eq.(\ref{eq:A_S}) is achieved if and only if $p(\mathbf{h}|\hat{s}=1)=p(\mathbf{h}|\hat{s}=0)$, where $\hat{s} \in \hat{\mathcal{S}}$ and $\mathbf{h}$ is final layer representation learned by the $K$-layer GNN classifier $f_{\mathcal{G}}$.
\label{prop:A_s}
\end{proposition}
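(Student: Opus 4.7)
The plan is to follow the classical GAN-style analysis of Goodfellow et al., treating the adversary $f_A$ as a discriminator between the two conditional representation distributions $p(\mathbf{h}|\hat{s}=1)$ and $p(\mathbf{h}|\hat{s}=0)$, and the GNN classifier $f_{\mathcal{G}}$ (which induces these distributions through $\mathbf{h}_v = f_{\mathcal{G}}^{(K)}(\mathbf{x}_v,\mathcal{N}^{(K)}_v)$) as a generator. The key observation is that once we replace the true sensitive attribute $s$ with the estimator's label $\hat{s}$, the inner maximization in Eq.(\ref{eq:A_S}) is structurally identical to the discriminator objective of a GAN, so the same two-step recipe applies.

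First I would fix $\theta_{\mathcal{G}}$, which fixes the two densities $p(\mathbf{h}|\hat{s}=0)$ and $p(\mathbf{h}|\hat{s}=1)$, and solve the inner maximization over $f_A$. Writing $\mathcal{L}_A$ as a single integral of the form $\int \bigl[p(\mathbf{h}|\hat{s}=1)\log f_A(\mathbf{h}) + p(\mathbf{h}|\hat{s}=0)\log(1-f_A(\mathbf{h}))\bigr]\,d\mathbf{h}$ and maximizing the integrand pointwise in $f_A(\mathbf{h})\in(0,1)$ yields the optimal adversary
\begin{equation*}
f_A^{*}(\mathbf{h}) \;=\; \frac{p(\mathbf{h}|\hat{s}=1)}{p(\mathbf{h}|\hat{s}=1)+p(\mathbf{h}|\hat{s}=0)}.
\end{equation*}
This requires the standard assumption that $f_A$ is drawn from a sufficiently expressive hypothesis class so that this pointwise optimum is attainable.

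Next I would substitute $f_A^{*}$ back into $\mathcal{L}_A$ and rearrange. Introducing the mixture $m(\mathbf{h})=\tfrac{1}{2}\bigl(p(\mathbf{h}|\hat{s}=1)+p(\mathbf{h}|\hat{s}=0)\bigr)$ and doing the usual $\log 2$ bookkeeping, the virtual outer objective becomes
\begin{equation*}
\mathcal{L}_A(\theta_{\mathcal{G}}, f_A^{*}) \;=\; -2\log 2 \;+\; \mathrm{KL}\!\bigl(p(\mathbf{h}|\hat{s}=1)\,\|\,m\bigr) + \mathrm{KL}\!\bigl(p(\mathbf{h}|\hat{s}=0)\,\|\,m\bigr),
\end{equation*}
which equals $-2\log 2 + 2\cdot\mathrm{JSD}\bigl(p(\mathbf{h}|\hat{s}=1)\,\|\,p(\mathbf{h}|\hat{s}=0)\bigr)$. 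Since the Jensen--Shannon divergence is nonnegative and vanishes iff the two distributions coincide almost everywhere, the outer minimization over $\theta_{\mathcal{G}}$ attains its global minimum $-2\log 2$ exactly when $p(\mathbf{h}|\hat{s}=1)=p(\mathbf{h}|\hat{s}=0)$, establishing the ``if and only if'' claim.

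The routine part is the pointwise optimization and the JSD identity; the delicate part, and what I expect to be the main obstacle, is justifying that this characterization is meaningful in our setting. Specifically, one must argue (i) that the adversary class is expressive enough for $f_A^{*}$ to be realized so the minimax value is actually the one computed above, and (ii) that achieving $p(\mathbf{h}|\hat{s}=1)=p(\mathbf{h}|\hat{s}=0)$ is attainable by the GNN family $f_{\mathcal{G}}$. The proposition as stated only asserts the characterization of the minimum; the subsequent discussion in the paper on how noisy $\hat{s}$ (versus the true $s$) still yields fairness via statistical parity needs to be kept conceptually separate from this proof, which treats $\hat{s}$ as the conditioning variable throughout.
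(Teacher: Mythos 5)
Your proof is correct and follows essentially the same route as the paper: both derive the optimal adversary $f_A^{*}(\mathbf{h})=\frac{p(\mathbf{h}|\hat{s}=1)}{p(\mathbf{h}|\hat{s}=1)+p(\mathbf{h}|\hat{s}=0)}$ (the paper by citing Proposition 1 of Goodfellow et al., you by the pointwise argument), substitute it back to obtain $-\log 4 + 2\cdot\mathrm{JSD}\bigl(p(\mathbf{h}|\hat{s}=1)\,\|\,p(\mathbf{h}|\hat{s}=0)\bigr)$, and conclude via nonnegativity of the Jensen--Shannon divergence and its vanishing iff the two distributions coincide. Your additional remarks on adversary expressiveness and attainability by the GNN family are reasonable caveats the paper leaves implicit, but they do not change the argument.
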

\begin{proof}
According to Proposition 1. in \cite{goodfellow2014generative}, the optimal adversary is 
$f_A^*(\mathbf{h})=\frac{p(\mathbf{h}|\hat{s}=1)}{p(\mathbf{h}|\hat{s}=1) + p(\mathbf{h}|\hat{s}=0)}$. Then the min max game in Eq.(\ref{eq:A_S}) could be reformulated as minimizing this function:
\begin{equation}
\begin{aligned}
 C^s & = \mathbb{E}_{\mathbf{h} \sim p(\mathbf{h}|\hat{s}=1)} \big[\log{\frac{p(\mathbf{h}|\hat{s}=1)}{p(\mathbf{h}|\hat{s}=1) + p(\mathbf{h}|\hat{s}=0)}}\big] \\
 & + \mathbb{E}_{\mathbf{h} \sim p(\mathbf{h}|\hat{s}=0)} \big[ \log{\frac{p(\mathbf{h}|\hat{s}=0)}{p(\mathbf{h}|\hat{s}=1) + p(\mathbf{h}|\hat{s}=0)}} \big] \\
 & = -\log(4) + 2\cdot JSD(p(\mathbf{h}|\hat{s}=1)||p(\mathbf{h}|\hat{s}=0).
\end{aligned}
\end{equation}
The Jensen-Shannon divergence between two distributions is non-negative, and become zero if the two distributions are equal. Thus, only if $p(\mathbf{h}|\hat{s}=1)=p(\mathbf{h}|\hat{s}=0)$, the objective function $C^s$ will reach the minimum, which completes our proof.
\end{proof}
\begin{theorem}
Let $\hat{y}$ denote the prediction of $f_{\mathcal{G}}$.
Suppose:
\begin{enumerate}
    \item The estimated sensitive attribute $\hat{s} $ and $\mathbf{h}$ are independent conditioned on true sensitive attribute  $s$, i.e., $p(\hat{s},\mathbf{h}|s) = p(\hat{s}|s)p(\mathbf{h}|s)$;
    \item $p(s=1|\hat{s}=1) \neq p(s=1|\hat{s}=0)$.
\end{enumerate}
If Eq.(\ref{eq:A_S}) reaches the global minimum, the GNN classifier $f_{\mathcal{G}}$ will achieve statistical parity, i.e., $p(\hat{y}|s=0)=p(\hat{y}|s=1)$.
\label{theorem:adv}
\end{theorem}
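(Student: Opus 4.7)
The plan is to chain together two reductions: first use Proposition \ref{prop:A_s} to convert the global-minimum condition on Eq.(\ref{eq:A_S}) into an equation on conditional distributions over the estimated sensitive attribute, and then use the two assumptions to lift that equation from $\hat s$ to the true $s$. Since $\hat y$ is a deterministic function of $\mathbf h$ through Eq.(\ref{eq:y_v}), equality of $p(\mathbf h \mid s=0)$ and $p(\mathbf h \mid s=1)$ will transport directly to equality of $p(\hat y \mid s=0)$ and $p(\hat y \mid s=1)$, which is statistical parity.

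First I would invoke Proposition \ref{prop:A_s} to assert that at the global minimum of Eq.(\ref{eq:A_S}) we have $p(\mathbf h \mid \hat s = 0) = p(\mathbf h \mid \hat s = 1)$. Next I would expand each side by marginalizing over the true sensitive attribute:
\begin{equation}
p(\mathbf h \mid \hat s) = \sum_{s \in \{0,1\}} p(\mathbf h \mid s, \hat s)\, p(s \mid \hat s).
\end{equation}
Assumption (1) (conditional independence of $\hat s$ and $\mathbf h$ given $s$) lets me replace $p(\mathbf h \mid s, \hat s)$ by $p(\mathbf h \mid s)$, so each $p(\mathbf h \mid \hat s)$ becomes a convex combination of the two class-conditional densities $p(\mathbf h \mid s=0)$ and $p(\mathbf h \mid s=1)$ with mixing weights determined by $p(s \mid \hat s)$.

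Then I would equate the two mixtures and rearrange to collect the $p(\mathbf h \mid s=0)$ and $p(\mathbf h \mid s=1)$ terms. Using $p(s=0\mid\hat s) = 1 - p(s=1\mid\hat s)$, the coefficients on both sides simplify to the single scalar
\begin{equation}
\delta \;=\; p(s=1 \mid \hat s=1) - p(s=1 \mid \hat s=0),
\end{equation}
yielding $\delta \cdot p(\mathbf h \mid s=0) = \delta \cdot p(\mathbf h \mid s=1)$. Assumption (2) guarantees $\delta \neq 0$, so I can divide through and conclude $p(\mathbf h \mid s=0) = p(\mathbf h \mid s=1)$. Finally, because $\hat y = \sigma(\mathbf h \cdot \mathbf w)$ depends on the node only through $\mathbf h$, the pushforward gives $p(\hat y \mid s=0) = p(\hat y \mid s=1)$.

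The routine part is the algebra of the mixture equation; the only genuine content is the conditional-independence assumption (1), which is exactly what is needed to turn a statement about $\hat s$-conditional distributions into one about $s$-conditional distributions, and assumption (2), which keeps $\delta$ away from zero so the two class-conditional densities are not underdetermined. Conceptually, these two assumptions together say that $\hat s$ is a noisy-but-informative proxy for $s$ whose noise depends only on $s$; the main subtlety of the write-up is simply making it clear why this proxy structure is enough to preserve fairness under debiasing with estimated labels.
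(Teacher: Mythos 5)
Your proposal is correct and follows essentially the same route as the paper's proof: invoke Proposition \ref{prop:A_s}, marginalize $p(\mathbf{h}\mid\hat{s})$ over $s$, use the conditional-independence assumption to replace $p(\mathbf{h}\mid s,\hat{s})$ by $p(\mathbf{h}\mid s)$, and use assumption (2) to cancel the mixing-weight difference and conclude $p(\mathbf{h}\mid s=0)=p(\mathbf{h}\mid s=1)$, hence parity of $\hat{y}=\sigma(\mathbf{h}\cdot\mathbf{w})$. Your collection of the coefficients into a single scalar $\delta$ is a slightly cleaner rearrangement than the paper's ratio form, but the argument is substantively identical.
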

\begin{proof}
Under the assumption that $p(\hat{s},\mathbf{h}|s) = p(\hat{s}|s)p(\mathbf{h}|s)$, we could obtain $p(\mathbf{h}|s,\hat{s}) = p(\mathbf{h}|s)$.
From Proposition \ref{prop:A_s}, we have  $p(\mathbf{h}|\hat{s}=1) = p(\mathbf{h}|\hat{s}=0)$ when the algorithm converges, which is equivalent to $\sum_s p(\mathbf{h},s|\hat{s}=1) = \sum_s p(\mathbf{h},s|\hat{s}=0)$. Together with $p(\mathbf{h}|s,\hat{s}) = p(\mathbf{h}|s)$, we arrive at 
\begin{equation}
    \begin{aligned}
    \sum_s p(\mathbf{h}|s)p(s|\hat{s}=1) = \sum_s p(\mathbf{h}|s)p(s|\hat{s}=0)
    \end{aligned}
    \label{eq:13}
\end{equation}
Reordering the terms in Eq.(\ref{eq:13}), we can get
\begin{equation}
    \begin{aligned}
    \frac{p(\mathbf{h}|s=1)}{p(\mathbf{h}|s=0)} & = \frac{p(s=0|\hat{s}=1)-p(s=0|\hat{s}=0)}{p(s=1|\hat{s}=0)-p(s=1|\hat{s}=1)} \\
    & = \frac{(1-p(s=1|\hat{s}=1))-(1-p(s=1|\hat{s}=0))}{p(s=1|\hat{s}=0)-p(s=1|\hat{s}=1)} \\
    & = 1
    \end{aligned}
    \label{eq:ratio}
\end{equation}
Eq.(\ref{eq:ratio}) shows that at the global minimum $p(\mathbf{h}|s=1) = p(\mathbf{h}|s=1)$ under the assumption  $p(s=1|\hat{s}=1) \neq p(s=1|\hat{s}=0)$.
Since $\hat{y} = \sigma(\mathbf{h} \cdot \mathbf{w})$, we could get $p(\hat{y}|s=1) = p(\hat{y}|s=0)$. Thus, the statistical parity is achieved when Eq.(\ref{eq:A_S}) converges. 
\end{proof}

In our proof, two assumptions are made. For the first assumption, since we use $f_E$ to predict the sensitive attributes $\hat{s}$ and $f_{\mathcal{G}}$ to get the latent representation $\mathbf{h}$, and $f_E$ and $f_{\mathcal{G}}$ doesn't share any parameters, it is generally true that $\hat{s}$ is independent with the representation $\mathbf{h}$, i.e., $p(\hat{s},\mathbf{h}|s) = p(\hat{s}|s)p(\mathbf{h}|s)$.
As for the second assumption, it will be satisfied when we have a reasonable estimator $f_E$, i.e., $f_E$ doesn't give random predictions.

\subsection{Covariance Constraint}
The instability of the training process of adversarial learning is well known \cite{arjovsky2017towards}. In adversarial debiasing, failure to coverage may result in a classifier with discrimination. 
To alleviate this issue, we add a covariance constraint~\cite{zafar2015fairness,zafar2017fairness} on the output of $f_{\mathcal{G}}$ to help the model achieve fairness. 
The covariance constraint has been explored in~\cite{zafar2015fairness,zafar2017fairness} by minimizing the absolute covariance between users' sensitive attributes and the signed distance from the users' features to the decision boundary for fair linear classifiers. 
In our problem, only a small portion of users' sensitive attributes are known and the decision boundary of GNN is hard to obtain. Thus, we propose to minimize the absolute covariance between the noisy sensitive attribute $\hat{s} \in \hat{\mathcal{S}}$ and prediction $\hat{y}$ as
\begin{equation}
    \mathcal{L}_R = |\text{Cov}(\hat{s},\hat{y})| = |\mathbb{E}[(\hat{s}-\mathbb{E}(\hat{s}))(\hat{y}-\mathbb{E}(\hat{y}))]|,
\end{equation}
where $|\cdot|$ indicates the absolute value. 

\textbf{Theoretical Analysis.}
Since $\mathcal{L}_R$ is the absolute value of covariance between $\hat{y}$ and $\hat{s}$, $\mathcal{L}_R=0$, i.e., the global minimum of $\mathcal{L}_R$, is the prerequisite that $\hat{y}$ and $\hat{s}$ are independent.
Thus, we will show that $\mathcal{L}_R=0$ is the prerequisite of the statistical parity under mild assumption with the following theorem.
\begin{theorem}
Suppose that $p(\hat{s},\mathbf{h}|s) = p(\hat{s}|s)p(\mathbf{h}|s)$, when $f_{\mathcal{G}}$ satisfies statistical parity, i.e. $\hat{y} \bot s $,  $\hat{y}$ is independent with $\hat{s}$ and $\mathcal{L}_R = 0$. 
\label{Theorem:constraint}
\end{theorem}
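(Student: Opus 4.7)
The plan is to split the conclusion into two pieces: first deduce that $\hat{y}$ and $\hat{s}$ are independent as random variables, and then read off $\mathcal{L}_R=0$ from the elementary fact that independent variables have zero covariance. So the real work is only in proving independence.

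First I would exploit that $\hat{y}=\sigma(\mathbf{h}\cdot\mathbf{w})$ is a deterministic function of $\mathbf{h}$. Under this, the assumption $p(\hat{s},\mathbf{h}|s)=p(\hat{s}|s)p(\mathbf{h}|s)$ transfers directly to $\hat{y}$, giving $p(\hat{y},\hat{s}|s)=p(\hat{y}|s)p(\hat{s}|s)$; in words, $\hat{s}$ and $\hat{y}$ are conditionally independent given the true sensitive attribute $s$. This is the only place where the structural assumption of the theorem is used, and it is the same device that powered Theorem~\ref{theorem:adv}.

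Next I would marginalize over $s$ via the law of total probability and plug in the statistical-parity hypothesis $p(\hat{y}|s)=p(\hat{y})$:
\begin{equation*}
p(\hat{y},\hat{s}) \;=\; \sum_{s} p(\hat{y}|s)\,p(\hat{s}|s)\,p(s) \;=\; p(\hat{y})\sum_{s} p(\hat{s}|s)\,p(s) \;=\; p(\hat{y})\,p(\hat{s}).
\end{equation*}
This establishes $\hat{y}\bot\hat{s}$. From this, $\mathbb{E}[\hat{s}\hat{y}]=\mathbb{E}[\hat{s}]\,\mathbb{E}[\hat{y}]$, so $\mathrm{Cov}(\hat{s},\hat{y})=0$ and therefore $\mathcal{L}_R=|\mathrm{Cov}(\hat{s},\hat{y})|=0$.

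I do not anticipate a serious obstacle: the argument is essentially bookkeeping with the law of total probability, paralleling the structure of the previous proof. The one subtle point I would flag explicitly is the step promoting the conditional independence of $(\hat{s},\mathbf{h})$ given $s$ into conditional independence of $(\hat{s},\hat{y})$ given $s$; this relies on $\hat{y}$ being a deterministic function of $\mathbf{h}$ with no extra independent randomness, which is exactly what Eq.~(\ref{eq:y_v}) provides. I would also close with a brief remark that, in contrast to Theorem~\ref{theorem:adv}, this theorem only gives $\mathcal{L}_R=0$ as a \emph{necessary} condition for statistical parity, consistent with the wording ``prerequisite'' in the text; hence $\mathcal{L}_R$ is appropriately used as a soft regularizer complementing the adversarial objective rather than as a standalone certificate of fairness.
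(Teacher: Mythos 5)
Your proof is correct and follows essentially the same route as the paper: transfer the conditional independence of $(\hat{s},\mathbf{h})$ given $s$ to $(\hat{s},\hat{y})$ via the deterministic map $\hat{y}=\sigma(\mathbf{h}\cdot\mathbf{w})$, marginalize over $s$ using statistical parity to factor $p(\hat{y},\hat{s})=p(\hat{y})p(\hat{s})$, and conclude $\mathcal{L}_R=0$ from zero covariance under independence. The paper writes the same marginalization with $p(\hat{s},s)$ in place of your $p(\hat{s}|s)p(s)$, a purely cosmetic difference, and your explicit remarks about the deterministic-function step and the ``prerequisite'' reading are consistent with the paper's intent.
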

\begin{proof}
Through $p(\hat{s},\mathbf{h}|s) = p(\hat{s}|s)p(\mathbf{h}|s)$, we could get $p(\mathbf{h}|s,\hat{s})=p(\mathbf{h}|s)$. Then, $p(\hat{y}|s,\hat{s})=p(\hat{y}|s)$ could be derived.  When $\hat{y} \bot s$, the distribution $p(\hat{y},\hat{s})$ would be:
\begin{equation}
\begin{aligned}
p(\hat{y},\hat{s}) & = \sum_s p(\hat{y}|s) p(\hat{s},s) =  \sum_s p(\hat{y}) p(\hat{s},s) = p(\hat{y})p(\hat{s}).
\end{aligned}
\end{equation}
Thus, $\hat{y}$ is independent with $\hat{s}$ when the statistical parity is achieved. Then, we can get $\mathcal{L}_R = |\text{Cov}(\hat{s},\hat{y})| = |\mathbb{E}(\hat{s},\hat{y})-\mathbb{E}(\hat{s})\mathbb{E}(\hat{y})|=0$.
\end{proof}
In the proof, we use the first assumption in Theorem \ref{Theorem:constraint}, which is generally valid as discussed previously.

\subsection{Final Objective Function of FairGNN}
We now have $f_{\mathcal{G}}$ for label prediction, $f_E$ for sensitive attribute estimation, $f_A$ with adversarial debiasing to force the node representations learned by $f_{\mathcal{G}}$ are fair, and covariance constraint to further ensure that the prediction of $f_{\mathcal{G}}$ is fair. Combining all these together, the final objective function could be formulated as:
\begin{equation}
    \min_{\theta_\mathcal{G},\theta_E} \max_{\theta_A} \mathcal{L}_C + \mathcal{L}_E + \alpha  \mathcal{L}_R - \beta \mathcal{L}_A,
    \label{eq:total}
\end{equation}
where $\theta_\mathcal{G}$, $\theta_E$, and $\theta_A$ are the parameters of classifier, estimator, and adversary, respectively. $\alpha$ and $\beta$ are scalars to control the contributions of the covariance constraint and adversarial debiasing. 

\begin{algorithm}[t] 
\caption{ Training Algorithm of FairGNN.} 
\label{alg:Framwork} 
\begin{algorithmic}[1]
\REQUIRE
$\mathcal{G}=(\mathcal{V},\mathcal{E}, \mathbf{X})$, $\mathcal{Y}$, $\mathcal{S}$, $\alpha$ and $\beta$.
\ENSURE $f_{\mathcal{G}}$, $f_A$, and $f_E$
\STATE Initialize $f_E$ by optimizing Eq.(\ref{eq:L_E}) w.r.t $\theta_E$
\REPEAT 
\STATE Obtain the estimated sensitive attributes with $f_E$
\STATE Optimize the GNN classifier parameters $\theta_{\mathcal{G}}$, the adversary parameters $\theta_A$, and the estimator parameters $\theta_E$ by Eq.(\ref{eq:total}). 

\UNTIL convergence
\RETURN $f_{\mathcal{G}}$, $f_A$, and $f_E$
\end{algorithmic}
\end{algorithm}

\subsection{An Training Algorithm of FairGNN}
The training algorithm of FairGNN is presented in Algorithm \ref{alg:Framwork}. Specially, we first pretrain $f_E$ to ensure it meets the second assumption in Theorem \ref{theorem:adv}. Sequentially, we optimize the whole model with Eq.(\ref{eq:total}) through the ADAM optimizer \cite{kingma2014adam}.
In the training process, we replace the hard labels in $\mathcal{L}_{A}$ with soft labels, i.e., the probability produced by $f_E$, to stabilize the adversarial learning \cite{salimans2016improved}.

\begin{table*}[t]
    \small
    \centering
    \caption{The comparisons of our proposed methods with the baselines.}
    \vskip -1.5em
    \begin{tabularx}{\textwidth}{|p{0.06\textwidth}|p{0.06\textwidth}|XX|XXXXXXX|}
    \hline
    Dataset & Metrics & GCN & GAT & ALFR & ALFR-e & Debias & Debias-e & FCGE & FairGCN & FairGAT \\
    \hline
    \hline
    
    \multirow{4}{*}{Pokec-z}
        & ACC (\%) & 70.2 $\pm 0.1$ & 70.4 $\pm 0.1$ & 65.4 $\pm 0.3$ & 68.0 $\pm 0.6$ & 65.2 $\pm 0.7$ & 67.5 $\pm 0.7$ & 65.9 $\pm 0.2$& \textbf{70.0} $\pm \mathbf{0.3}$ &  \textbf{70.1} $\pm \mathbf{0.1}$\\
        & AUC (\%)& 77.2 $\pm 0.1$ & 76.7 $\pm 0.1$ & 71.3 $\pm 0.3$ & 74.0 $\pm 0.7$ & 71.4 $\pm 0.6$ & 74.2 $\pm 0.7$ & 71.0 $\pm 0.2$ & \textbf{76.7} $\pm \mathbf{0.2}$ & \textbf{76.5} $\pm \mathbf{0.2}$\\
        & $\Delta_{SP}$ (\%)& 9.9 $\pm 1.1$ & 9.1 $\pm 0.9$ & 2.8 $\pm 0.5$ & 5.8 $\pm 0.4$ & 1.9 $\pm 0.6$ & 4.7 $\pm 1.0$ & 3.1 $\pm 0.5$ & \textbf{0.9} $\pm \mathbf{0.5}$ & \textbf{0.5} $\pm \mathbf{0.3}$\\
        & $\Delta_{EO}$ (\%)& 9.1 $\pm 0.6$ & 8.4 $\pm 0.6$ & 1.1 $\pm 0.4$ & 2.8 $\pm 0.8$ & 1.9 $\pm 0.4$ & 3.0 $\pm 1.4$ & 1.7 $\pm 0.6$ & \textbf{1.7} $\pm \mathbf{0.2}$ & \textbf{0.8} $\pm\mathbf{0.3}$\\
        \hline
        \hline
    \multirow{4}{*}{Pokec-n}
        & ACC (\%) & 70.5 $\pm 0.2$ & 70.3 $\pm 0.1$ & 63.1 $\pm 0.6$ & 66.2 $\pm 0.5$ & 62.6 $\pm 0.9$ & 65.6 $\pm 0.8$ & 64.8 $\pm 0.5$ & \textbf{70.1} $\pm \mathbf{0.2}$ & \textbf{70.0} $\pm \mathbf{0.2}$\\
        & AUC (\%)& 75.1 $\pm 0.2$ & 75.1 $\pm 0.2$ & 67.7 $\pm 0.5$ & 71.9 $\pm 0.3$ & 67.9 $\pm 0.7$ & 71.7 $\pm 0.7$ & 69.5 $\pm 0.4$ & \textbf{74.9} $\pm \mathbf{0.4}$& \textbf{74.9} $\pm \mathbf{0.4}$\\
        & $\Delta_{SP}$ (\%)& 9.6 $\pm 0.9$ & 9.4 $\pm 0.7$ & 3.05 $\pm 0.5$ & 4.1 $\pm 0.5$ & 2.4 $\pm 0.7$ & 3.6 $\pm 0.2$ & 4.1 $\pm 0.8$ & \textbf{0.8} $\pm \mathbf{0.2}$ & \textbf{0.6} $\pm \mathbf{0.3}$\\
        & $\Delta_{EO}$ (\%)& 12.8 $\pm 1.3$ & 12.0 $\pm 1.5$ & 3.9 $\pm 0.6$ & 4.6 $\pm 1.6 $ & 2.6 $\pm 1.0$ & 4.4 $\pm 1.2$ & 5.5 $\pm 0.9$& \textbf{1.1} $\pm \mathbf{0.5}$ & \textbf{0.8} $\pm \mathbf{0.2}$\\
        \hline
        \hline
    \multirow{4}{*}{NBA}
        & ACC (\%) & 71.2 $\pm 0.5$ & 71.9 $\pm 1.1$ & 64.3 $\pm 1.3$ & 66.0 $\pm 0.4$ & 63.1 $\pm 1.1$  & 65.6 $\pm 2.4$ & 66.0 $\pm 1.5$ & \textbf{71.1} $\pm \mathbf{1.0}$ & \textbf{71.5} $\pm \mathbf{0.8}$\\
        & AUC (\%)& 78.3 $\pm 0.3$ & 78.2 $\pm 0.6$ & 71.5 $\pm 0.3$ & 72.9 $\pm 1.0$ & 71.3 $\pm 0.7$ & 72.9 $\pm 1.2$ & 73.6 $\pm 1.5$ & \textbf{77.0} $\pm \mathbf{0.3}$ & \textbf{77.5} $\pm \mathbf{0.7}$\\
        & $\Delta_{SP}$ (\%)& 7.9 $\pm 1.3$ & 10.2 $\pm 2.5 $ & 2.3 $\pm 0.9$ & 4.7 $\pm 1.8$ & 2.5 $\pm 1.5$ & 5.3 $\pm 0.9$ & 2.9 $\pm 1.0$ & \textbf{1.0} $\pm \mathbf{0.5} $ & \textbf{0.7} $\pm \mathbf{0.5}$\\
        & $\Delta_{EO} (\%)$ & 17.8 $\pm 2.6$ & 15.9 $\pm 4.0$ & 3.2 $\pm 1.5$ & 4.7 $\pm 1.7$ & 3.1 $\pm 1.9$ & 3.1 $\pm 1.3$ & 3.0 $\pm 1.2$ & \textbf{1.2} $\pm \mathbf{0.4}$ & \textbf{0.7} $\pm \mathbf{0.3}$\\
    \hline
    \end{tabularx}
    
    \label{tab:results}
    \vskip -1em
\end{table*}

\section{experiments} \label{sec:experiments}
In this section, we conduct experiments to show the effectiveness of FairGNN for fair node classification. In particular, we aim to answer the following questions:
\begin{itemize}
    \item \textbf{RQ1} Can the proposed FairGNN reduce the bias of GNNs while maintaining high accuracy?

    \item \textbf{RQ2} How do the sensitive attribute estimator, adversarial loss, and covariance constraint affect FairGNN?
    \item \textbf{RQ3} Is FairGNN effective when different amount of sensitive attributes or labels are provided in the training set? 
\end{itemize}
We use the same datasets introduced in Sec.~\ref{sec:datasets} for all the experiments. Next, we will begin by introducing compared methods. 

\subsection{Compared Methods}
We compare our proposed framework with GCN, GAT, and the following representative and state-of-the-art methods for fair classification and fair graph embedding learning:
\begin{itemize}[leftmargin=*]

    \item \textbf{ALFR} \cite{edwards2015censoring}: This is a pre-processing method. A discriminator is applied to remove the sensitive information in the representations produced by a MLP-based autoencoder. Then, linear classifier is trained on the debiased representations.
    \item \textbf{ALFR-e}: To utilize the graph structure information, ALFR-e concatenates
    the graph embeddings learned by deepwalk \cite{perozzi2014deepwalk} with the user features in the ALFR.
    \item \textbf{Debias} \cite{zhang2018mitigating}: This is an in-processing fair classification method.  It directly applies an discriminator on the estimated probability of classifier $\eta: \mathbf{x} \rightarrow \mathbb{R}$. It would make the probability distribution $p(\eta(\mathbf{x})|s=0)$ closer to $p(\eta(\mathbf{x})|s=1)$.
    \item \textbf{Debias-e}: Similar to the ALFR-e, we also add the deepwalk embeddings to the features used in Debias.
    \item \textbf{FCGE} \cite{bose2019compositional}: FCGE is proposed to learn fair node embeddings  in graph without node features through edge prediction. The sensitive information in the embeddings is filtered by discriminators.
\end{itemize}
ALFR and ALFR-e are trained with features of all the users $\mathcal{V}$, labels of $\mathcal{V}_L$, and the sensitive attributes of $\mathcal{V}_S$ for fair classification.
Debis and Debias-e require the sensitive attributes of labeled nodes, which is on contrary with our setting that $\mathcal{V}_L$ could have no overlap with $\mathcal{V}_S$. Thus, we use the estimated labels of $\mathcal{V}_S$, features of $\mathcal{V}_L$, and labels of $\mathcal{V}_L$ to train Debias and Debias-e. FCGE utilizes $\mathcal{G}$, labels of $\mathcal{V}_L$, and sensitive attributes of $\mathcal{V}_S$.

For FairGNN, we deploy a one hidden layer GCN for $f_E$. The hidden dimension is set as 128. We use a linear classifier for $f_A$. To verify that our framework is useful for various GNNs, we adopt both GCN and GAT as the backbone of the FairGNN classifier $f_{\mathcal{G}}$, which are named as \textbf{FairGCN} and \textbf{FairGAT}. In FairGCN, the GCN classifier contains one hidden layer with dimension 128. The GAT 
classifier in FairGAT also contains two layers in total. We set the number of heads as 1. The dimensions of the GAT classifiers' hidden layer for Pokec-z, Pokec-n and NBA are 64, 64 and 32, respectively.

\subsection{Fair Classification on Graph}

To answer \textbf{RQ1}, we evaluate our proposed FairGNN in terms of fairness and classification performance. 
$\Delta_{SP}$ and $\Delta_{EO}$ are used to show the discrimination level, which are introduced in Section~\ref{sec:fairness}. The smaller $\Delta_{SP}$ and $\Delta_{EO}$ are, the more fair the classifier is. Accuracy (ACC) and ROC AUC score are used to evaluate the classification performance. For all the models, we tune the hyperparameters on the training set via cross validation. 
For FairGCN, we set $\alpha$ to 100 and $\beta$ to 1.
For FairGAT, $\alpha$ is 2 and $\beta$ is 0.1. More details about hyperparameter selection will be discussed in Sec~\ref{sec:parameter_sensitivity}. All the experiments are conducted 5 times. 
The mean and standard deviations for all the models on the three datasets are reported in Table \ref{tab:results}. From the table, we make the following observations:
\begin{itemize}[leftmargin=*]
    \item Compared with GCN and GAT, the general fair classification methods and graph embeddings learning method show poor performance in classification even with the help of graph information, while FairGCN and FairGAT perform very close to the based GNNs. This suggests the necessity of investigating fair classification algorithms on GNNs for accurate predictions;
    \item Under the condition of limited sensitive information, baselines show obvious bias and the ones utilizing graph information are even worse. On the contrary, our proposed models obtain $\Delta_{SP}$ and $\Delta_{EO}$ that are close to 0, which indicates that the discrimination is basically eliminated; and
    \item FairGAT is slightly better than FairGCN in Fairness. This is reasonable because the learnable edge coefficients in GAT could be helpful to reduce the weights of the edges that bring bias. 
\end{itemize}
These observations demonstrate the effectiveness of our proposed framework in making fair and accurate predictions.

\subsection{Ablation Study}
To answer \textbf{RQ2}, we conduct ablation studies to understand the impacts of $f_E$, adversarial loss, and covariance constraint.

\subsubsection{Impact of $f_E$}  In our proposed framework, a GCN estimator is deployed to predict  sensitive attributes for adversarial debiasing. 
To show the importance of the GCN estimator, we analyze it from two aspects.
Firstly, to demonstrate the effectiveness of the noisy sensitive attributes, we eliminate the estimator and only use the provided sensitive attributes $\mathcal{S}$ to get a variant denoted as FairGNN$\backslash$E. Secondly, to investigate how a weaker estimator would influence the fair classification, we train a variant $\text{FairGNN}_{MLP}$ by using MLP as the estimator. Hyperparameters of 
these variants are determined by cross validation with gird search. Specifically, we vary $\alpha$ and $\beta$ among $\{0.0001,0.001,0.1,1\}$ and $\{1, 2, 5, 10, 20, 50, 100\}$, respectively. 
For each variant, the experiments are conducted 5 times. The average performance of fairness in terms of $\Delta_{SP}$ and node classificaiton in terms of AUC on Pockec-z are presented in Fig. \ref{fig:abl}(a) and (b), respectively. We only show the results on Pockec-z as we have similar observations on the other datasets. From the figures, we make the following observations: 
\begin{itemize}[leftmargin=*]
    \item The $\Delta_{SP}$ score of FairGNN$\backslash$E is much larger than that of FairGNN. which is because the provided sensitive attributes are inadequate. This shows that $f_E$ plays an important role in FairGNN; and 
    \item The performance of sensitive attribute prediction in terms of AUC for MLP estimator is 0.69, which is much lower than that of GCN estimator, which is 0.80. Though $\text{FairGNN}_{MLP}$ adopts a much weaker estimator than FairGNN, the performance in terms of fairness is slightly worse than FairGNN. This aligns with our theoretical analysis that $f_E$ doesn't need to be very accurate. However, the marginal differences still indicate that too much noise in sensitive attributes may still slightly affect the fairness.
\end{itemize}

\begin{figure}[t]
\centering
\begin{subfigure}{0.49\columnwidth}
    \centering
    \includegraphics[width=\linewidth]{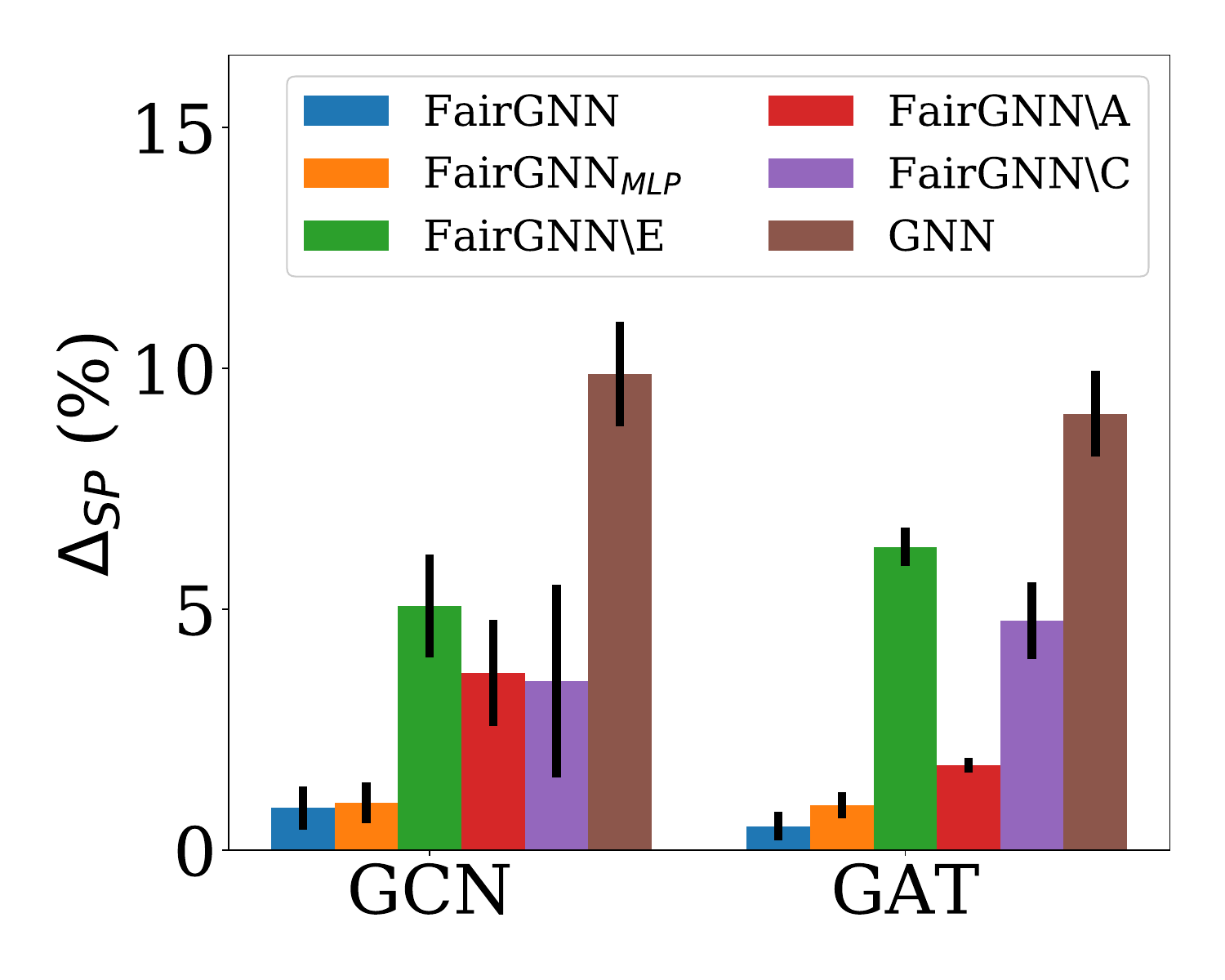} 
    \vskip -0.5em
    \caption{$\Delta_{SP}$}
    \label{fig:abl_sp}
\end{subfigure}
\begin{subfigure}{0.49\columnwidth}
    \centering
    \includegraphics[width=\linewidth]{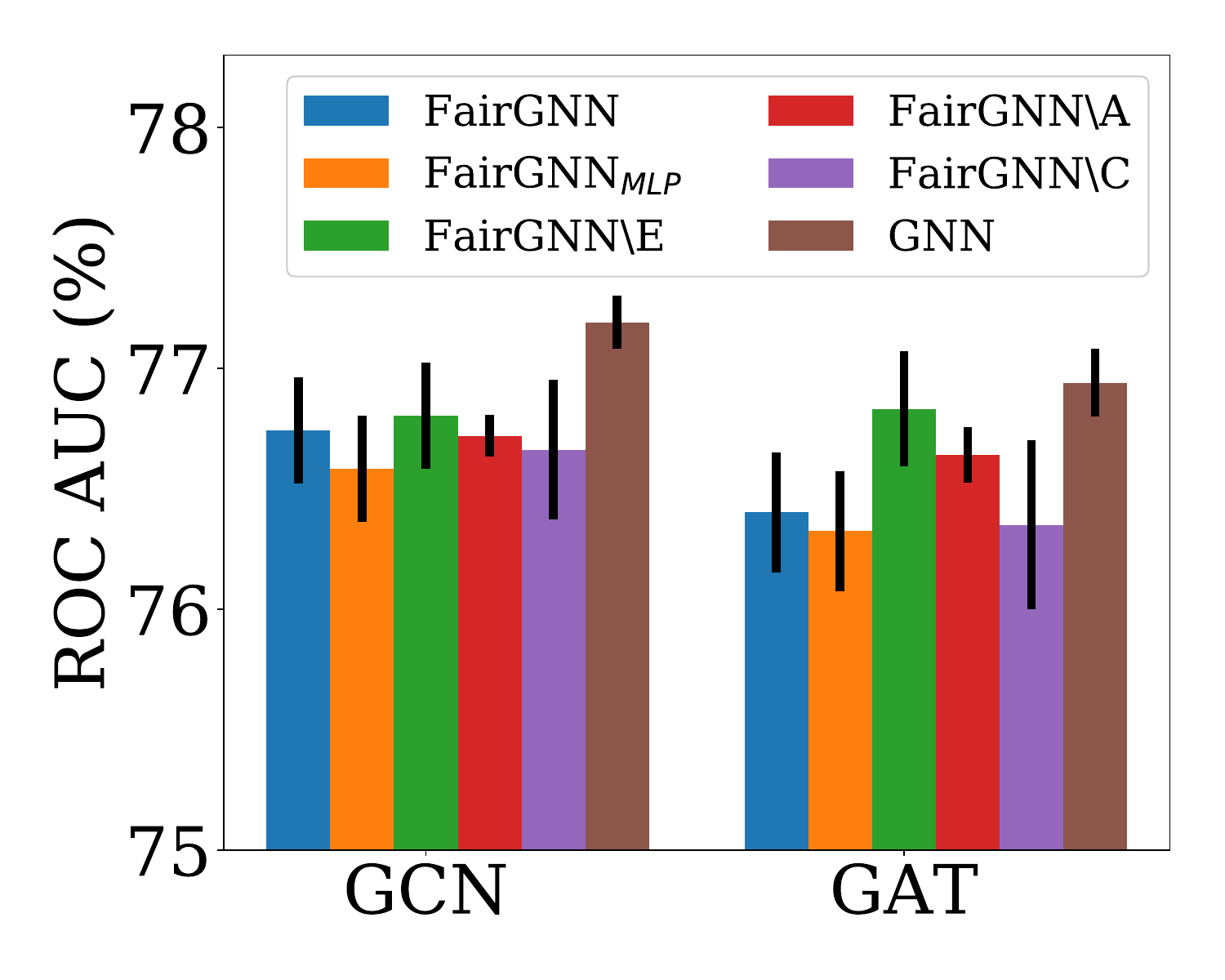} 
    \vskip -0.5em
    \caption{ROC AUC}
    \label{fig:abl_roc}
\end{subfigure}
 \vspace{-1.5em}
\caption{Comparisons between FairGNN and its variants.}
\vspace{-1em}
\label{fig:abl}
\end{figure}

\subsubsection{Impacts of the adversarial debiasing and covariance constraint} 
To demonstrate the effects of the adversarial loss and covariance constraint, we train two variants of FairGNN, i.e., FairGNN$\backslash$A and FairGNN$\backslash$C, where FairGNN$\backslash$A means FairGNN without the adversarial loss, and FairGNN$\backslash$C means FiarGNN without covariance constraint. Similarly, for each variant, we run the experiment 5 times on Pokec-z and the average performances are shown in Figure \ref{fig:abl}. From the figure, we observe: 
\begin{itemize}[leftmargin=*]
    \item The $\Delta_{SP}$ scores for both FairGNN$\backslash$C and FairGNN$\backslash$A are much smaller than that of GNNs in Figure \ref{fig:abl}, which shows that both covariance constraint and adversarial debiasing can improve fairness; and
    \item The $\Delta_{SP}$ scores for both FairGNN$\backslash$C and FairGNN$\backslash$A are much larger than that of FairGNN, which implies that using both covariance constraint and adversarial debiasing can achieve better fairness. This is because they regularize the GNN from two different perspectives, i.e., adversarial debiasing regularizes on the node representations while covariance cosntraint is directly on the predictions for fair classification.
\end{itemize}

\subsection{Impacts of Sizes of $\mathcal{V}_S$ and $\mathcal{V}_L$}
To answer \textbf{RQ3}, we study the impacts of the sizes of $\mathcal{V}_S$ and  $\mathcal{V}_L$ on FairGAT. 
 We set $\alpha=0.1$ and $\beta=2$ based on cross validation.
We vary $|\mathcal{V}_S|$ as $\{200,600,1000,1400,1800,2200,2600,3000\}$. Each experiment is conducted 5 times and the average results on Pokec-z with comparison to FairGAT$\backslash$E and ALFR-e are  shown in Fig.~\ref{fig:sens_compare}. 
From the figure, we observe that: 
(i) Generally, both FairGAT$\backslash$E and ALFR-e have high discrimination scores when  $|\mathcal{V}_S|$ is small. They need plenty of data with sensitive attributes to become effective. FairGAT could get very low $\Delta_{SP}$ even when  $|\mathcal{V}_S|$ is as small as 200. This implies that FairGAT is insensitive to the size of data with sensitive attributes, which is because we have $f_E$ to estimate the sensitive attributes. Though extremely small $|\mathcal{V}_S|$ would lead to a weak $f_E$, we still have similar $\Delta_{SP}$ score as that when $\mathcal{V}_S$ is large. This verifies  our theoretical analysis that we can achieve good fairness with a reasonable $f_E$; 
(ii) FairGAT$\backslash$E and ALFR-e decrease slightly 
in classification performance with the increasing of the size of $\mathcal{V}_S$, which is because more data with sensitive attribute would lead to a stricter regularization. In the contrary, FairGAT keeps high classification performance and even perform slightly better with more sensitive attributes. This is because the size of sensitive attributes $\hat{\mathcal{S}}$ used for training FairGAT are fixed to the size of $\mathcal{V}$, and less noise in  the estimation of the sensitive attributes is helpful to better learn representations for classification.

Similarly, we vary $|\mathcal{V}_L|$ as $\{500, 1000, 1500, 2000\}$ and each experiment is run for 5 times. The average results on Pokec-z are reported in Figure \ref{fig:sens_sp_l}. 
We only report the results on Pokec-z as we have simialr observations on other datasets. 
From the figure, we observe that: FairGAT consistently shows effectiveness in eliminating discrimination. The drop in classification performance is marginal. 
This demonstrates that our proposed method could achieve fairness while keep high accuracy in general  scenarios which correspond to various sizes of $\mathcal{V}_S$ and $\mathcal{V}_L$.

\begin{figure}[t]
\centering
\begin{subfigure}{0.45\columnwidth}
    \centering
    \includegraphics[width=\linewidth]{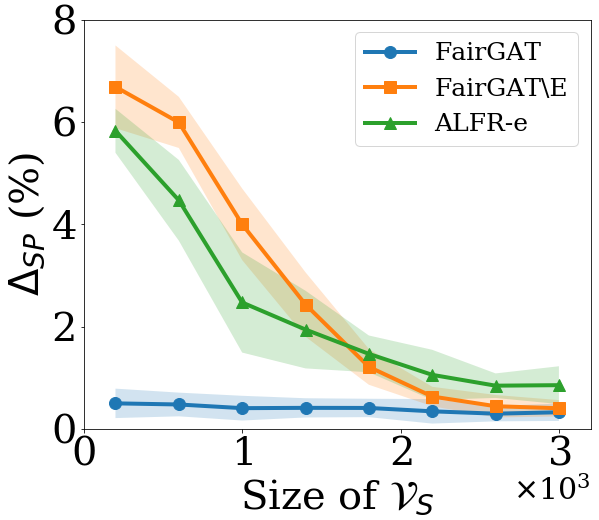} 
    \vspace{-5mm}
    \caption{$\Delta_{SP}$}
    \label{fig:sens_sp}
\end{subfigure}
\begin{subfigure}{0.45\columnwidth}
    \centering
    \includegraphics[width=\linewidth]{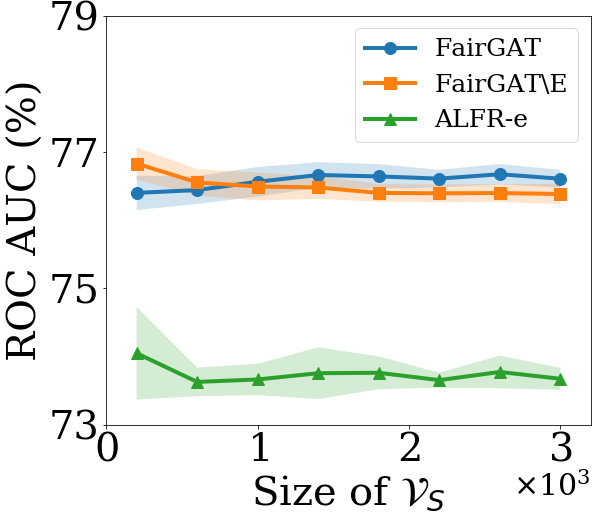} 
    \vspace{-5mm}
    \caption{ROC AUC}
    \label{fig:sens_roc}
\end{subfigure}
\vskip -1.2em
\caption{Impacts of the size of $\mathcal{V}_S$ to FairGAT.}
\vskip -1em
\vspace{-0.5em}
\label{fig:sens_compare}
\end{figure}

\begin{figure}[t]
\centering
\begin{subfigure}{0.45\columnwidth}
    \centering
    \includegraphics[width=\linewidth]{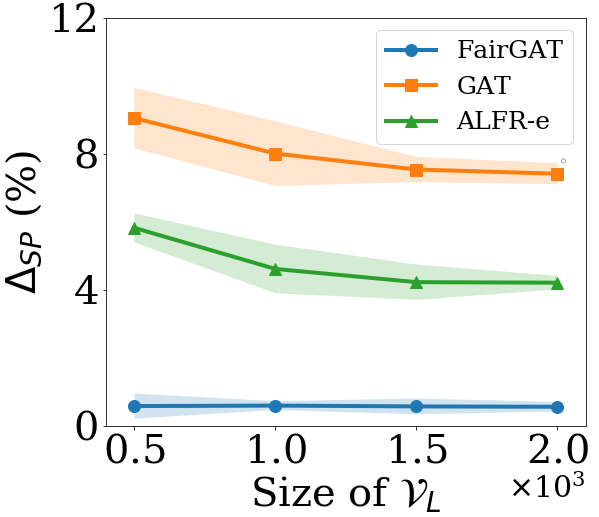} 
    \vspace{-5mm}
    \caption{$\Delta_{SP}$}
    \label{fig:sens_sp_l}
\end{subfigure}
\begin{subfigure}{0.45\columnwidth}
    \centering
    \includegraphics[width=\linewidth]{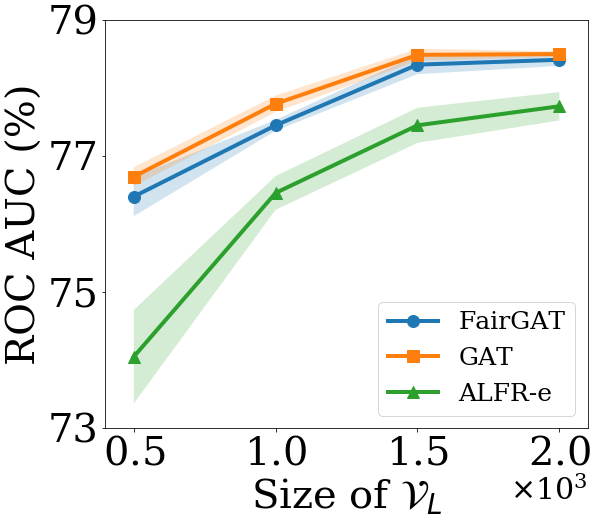} 
    \vspace{-5mm}
    \caption{ROC AUC}
    \label{fig:sens_roc_l}
\end{subfigure}
\vskip -1.2em
\caption{Impacts of the size of $\mathcal{V}_L$ to FairGAT.}
\vspace{-1em}
\label{fig:sens_compare_l}
\end{figure}

\subsection{Parameter Sensitivity} \label{sec:parameter_sensitivity}
There are two important hyperparameters in our proposed model, .i.e., $\alpha$ controlling the influence of the adversary to the GNN classifier, while $\beta$ controlling the contribution of the covariance constraint to ensure fairness. 
To investigate the parameter sensitivity and find the ranges that achieve high accuracy with low discrimination score, we train FairGAT models on Pokec-z with various hyperparameters.
More specifically, we alter the values of $\alpha$ and $\beta$ among $\{0.0001,0.001,0.01,0.1,1\}$ and $\{1,2,5,10,20,50,100\}$. 
The results are presented in Figure \ref{fig:para}.
From Figure \ref{fig:para} (\subref{fig:abl_roc}), we can find that when $\alpha \leq 0.01$ and $\beta \leq 20$ the classification performance is almost unaffected. Once $\alpha$ and $\beta$ are too large, the classifier's performance will decay rapidly. The impacts of the hyperparameters to the discrimination score are presented in Figure \ref{fig:para} (\subref{fig:abl_sp}). When we increase the value of $\alpha$, $\Delta_{SP}$ will 
firstly decrease as expected. Then, it would increase when the value of $\alpha$ is too large. Because it would be difficult to optimize the GNN classifier to the global minimum when the contribution of the adversary is extremely high. As for $\beta$, the discrimination score would consistently reduce when we increase its value. Combining the two figures, we could determine that when $\alpha \in [0.001, 0.01]$ and $\beta \in [5,20]$, the GNN classifier achieves fairness and maintains high node classification accuracy.
\begin{figure}[t]
\centering
\begin{subfigure}{0.45\columnwidth}
    \centering
    \includegraphics[width=\linewidth]{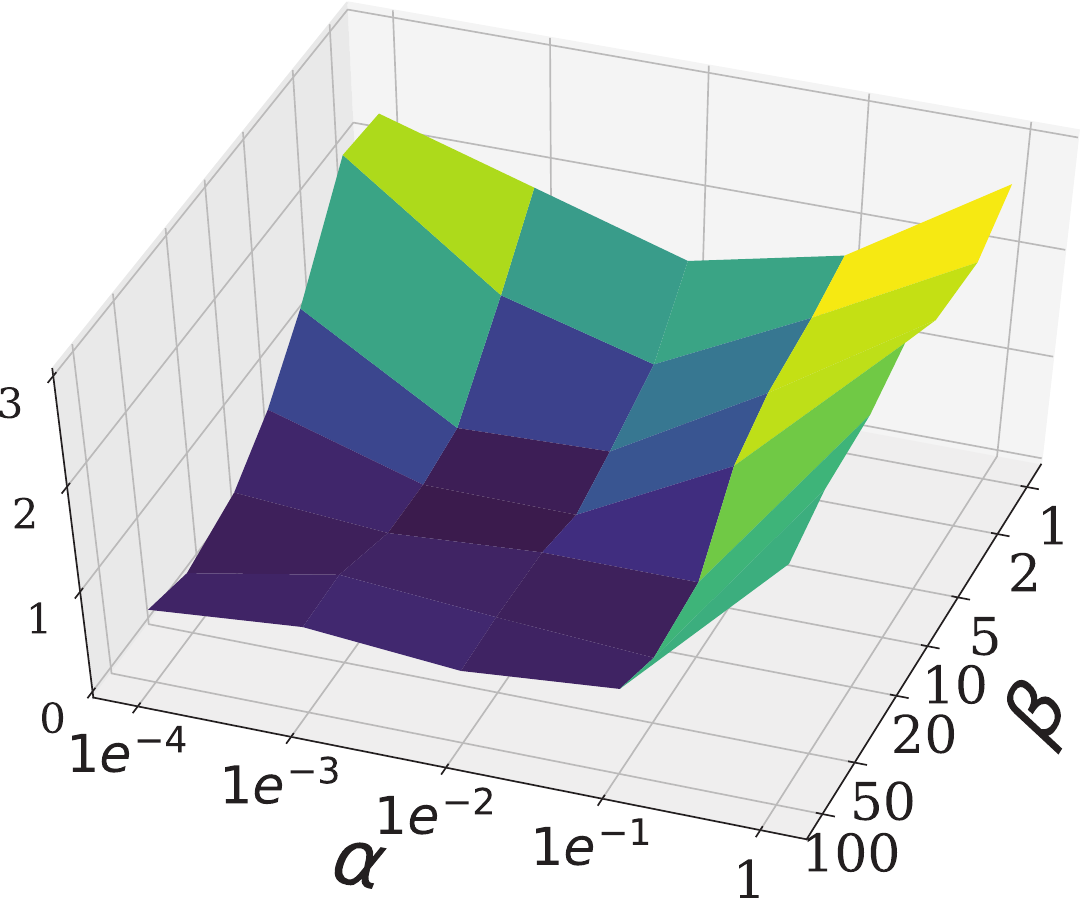} 
    \vspace{-5mm}
    \caption{$\Delta_{SP}$ (\%)}
    \label{fig:abl_sp}
\end{subfigure}
\begin{subfigure}{0.45\columnwidth}
    \centering
    \includegraphics[width=\linewidth]{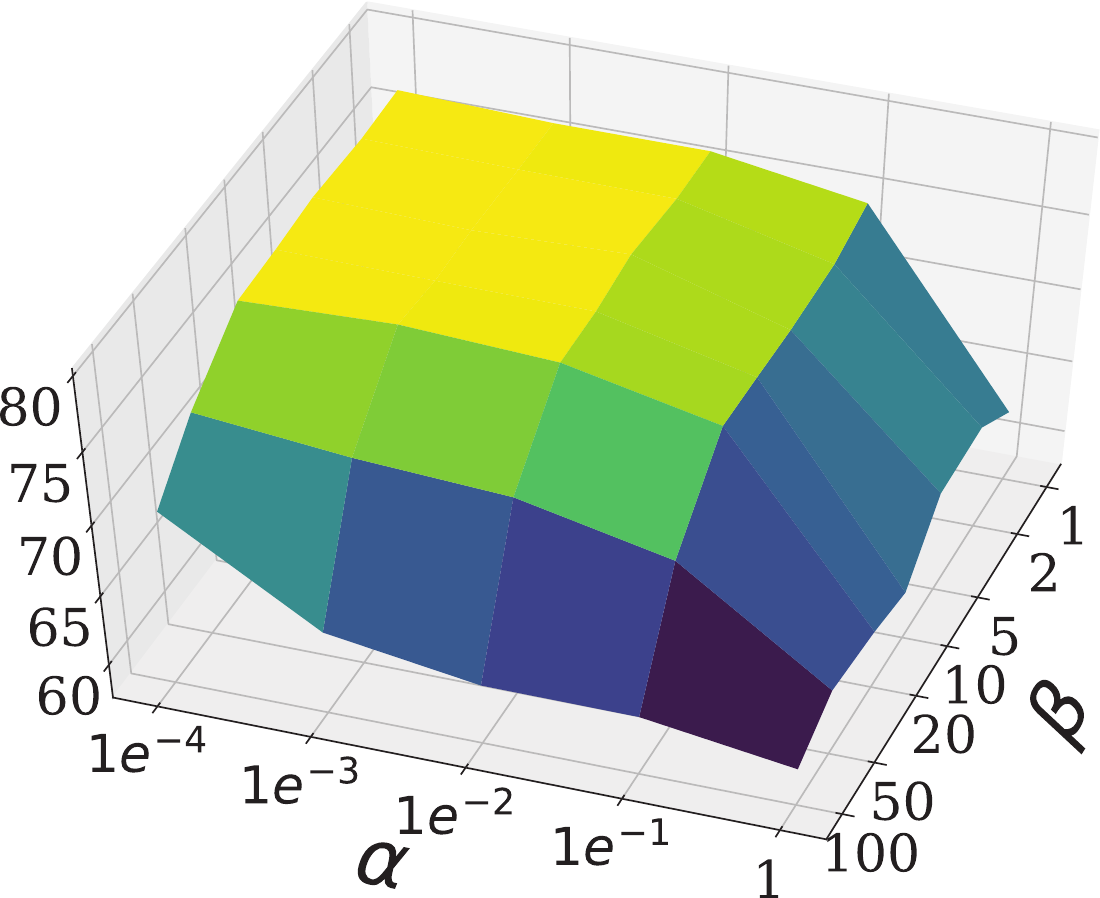} 
    \vspace{-5mm}
    \caption{ROC AUC (\%)}
    \label{fig:abl_roc}
\end{subfigure} ~~~
\vskip -1.3em
\caption{Parameter sensitivity analysis.}
\vskip -1.5em
\label{fig:para}
\end{figure}

\section{conclusion and future work} \label{sec:conclusion}
In this paper, we study a novel problem of fair GNN learning with limited sensitive information. We empirically demonstrate that GNNs exhibit severe bias. We propose a novel and flexible framework FairGNN which is able to significantly alleviate the bias issue of GNNs meanwhile  maintain high performance on node classification. FairGNN adopts a sensitive attribute estimator to alleviate the issue of lacking sensitive attribute information. With the estimated sensitive attributes, FairGNN designs adversarial debiasing and covariance constraint to regularize the GNN to have fair node representations and predictions, respectively. We theoretically show that FairGNN can reduce the bias. 
Experiment results on real-world datasets demonstrate the effectiveness of the proposed framework in terms of both fairness and classification performance. 
There are several interesting directions which need further investigation. First, we assume the provided  sensitive attributes are clean. However, for some applications in social media, users might randomly input sensitive attributes such as gender due to privacy concern. Thus, we will extend FairGNN to deal with limited and inaccurate sensitive information. Second, the experiments show that the edges are possible to bring bias. Thus, we will also explore methods which add/delete links in graphs to improve the fairness and classification performance of FairGNN.
\section{Acknowledgements}
This material is based upon work supported by, or in part by, the National Science Foundation (NSF) under grant IIS-1909702, IIS-1955851, and the Global Research Outreach program of Samsung Advanced Institute of Technology under grant \#225003. The findings and conclusions in this paper do not necessarily reflect the view of the funding agency.


\bibliographystyle{ACM-Reference-Format}
\bibliography{ref}

\end{document}